\newcounter{Lcount}
\newcommand{\numsquishlist}{
   \begin{list}{\arabic{Lcount}. }
    { \usecounter{Lcount}
 \setlength{\itemsep}{-.1ex}      \setlength{\parsep}{0ex}
      \setlength{\topsep}{0ex}       \setlength{\partopsep}{0ex}
      \setlength{\leftmargin}{1em} \setlength{\labelwidth}{1em}
      \setlength{\labelsep}{0.1em} } }
\newcommand{\numsquishend}{\end{list}}
\newcommand{\squishlist}{
   \begin{list}{$\bullet$}
    { \setlength{\itemsep}{-.1ex}      \setlength{\parsep}{0ex}
      \setlength{\topsep}{0ex}       \setlength{\partopsep}{0ex}
      \setlength{\leftmargin}{.8em} \setlength{\labelwidth}{1em}
      \setlength{\labelsep}{0.5em} } }
\newcommand{\squishend}{\end{list}}
\newcommand{\VLFMIP}{{\sc Variable-Lag Following Motifs Inference Problem }\xspace}%
\newenvironment{problem}[1][htb]
  {
   \begin{algorithm2e}[#1]%
  }{\end{algorithm2e}}
  \providecommand\BibTeX{{%
    \normalfont B\kern-0.5em{\scshape i\kern-0.25em b}\kern-0.8em\TeX}}}
\begin{document}

\title{ Framework for Variable-lag Motif Following Relation Inference In Time Series using Matrix Profile analysis
}


\author{Naaek Chinpattanakarn}
\orcid{0009-0005-0015-5736}
\affiliation{%
  \institution{King Mongkut's Institute of Technology Ladkrabang}
  \streetaddress{P.O. Box 1212}
  \city{Bangkok}
  \country{Thailand}
  \postcode{43017-6221}
}
\email{naaek.hugh@gmail.com}

\author{Chainarong Amornbunchornvej}
\orcid{0000-0003-3131-0370}
\affiliation{%
  \institution{National Electronics and Computer Technology Center}
  \streetaddress{112 Phahonyothin Road, Khlong Nueng, Khlong Luang District, Pathumthani 12120, Thailand. }
  \city{Pathumthani}
  \country{Thailand}}
\email{chainarong.amo@nectec.or.th}

\renewcommand{\shortauthors}{N. Chinpattanakarn and C. Amornbunchornvej}

\begin{abstract}
  Knowing who follows whom and what patterns they are following are crucial steps to understand collective behaviors (e.g. a group of human, a school of fish, or a stock market). Time series is one of resource that can be used to get insight regarding following relations. However, the concept of following patterns or motifs and the solution to find them in time series is not obvious. In this work, we formalize a concept of following motifs between two time series and present a framework to infer following patterns from data. The framework utilizes one of efficient and scalable methods to retrieve motifs from time series called matrix profile method. We compare our proposed framework with several baseline. The framework performs better than baseline in the simulation dataset. In the dataset of sounds from video, the framework is able to retrieve the following motifs within a pair to time series that two singers sink following each other. In the cryptocurrency dataset, the framework is capable of capturing the following motifs within a pair of time series from two digital currency. Our framework can be utilized in any field of time series.
\end{abstract}

\begin{CCSXML}
<ccs2012>
 <concept>
  <concept_id>10010520.10010553.10010562</concept_id>
  <concept_desc>Computer systems organization~Embedded systems</concept_desc>
  <concept_significance>500</concept_significance>
 </concept>
 <concept>
  <concept_id>10010520.10010575.10010755</concept_id>
  <concept_desc>Computer systems organization~Redundancy</concept_desc>
  <concept_significance>300</concept_significance>
 </concept>
 <concept>
  <concept_id>10010520.10010553.10010554</concept_id>
  <concept_desc>Computer systems organization~Robotics</concept_desc>
  <concept_significance>100</concept_significance>
 </concept>
 <concept>
  <concept_id>10003033.10003083.10003095</concept_id>
  <concept_desc>Networks~Network reliability</concept_desc>
  <concept_significance>100</concept_significance>
 </concept>
</ccs2012>
\end{CCSXML}

\ccsdesc[500]{Computer systems organization~Embedded systems}
\ccsdesc[300]{Computer systems organization~Redundancy}
\ccsdesc{Computer systems organization~Robotics}
\ccsdesc[100]{Networks~Network reliability}

\keywords{time series, following relations, matrix profile, data science}


\received{20 February 2007}
\received[revised]{12 March 2009}
\received[accepted]{5 June 2009}

\maketitle

\section{Introduction}

Followership is a process that one instance, a follower, interacts with other instance called a leader to achieve a collective goal~\cite{lapierre2014followership}, which is a counter part of leadership. Understand followership enables opportunities to get insight of how collective behaviors organized to serve the collective purpose, which can be a group of human~\cite{lapierre2014followership}, primates~\cite{strandburg2015shared}, or man-made system~\cite{FLICAtkdd}. Nevertheless, how can we infer followership from data? What are patterns that followers imitate their leader?

\begin{framed}
\noindent {\VLFMIP:} {Considering two time series, some patterns might appear in one time series, then, the same patterns might appear in another time series. This indicates that one time series follows the specific patterns of another time series.  {\bf Given two time series, the goal is to find a set of patterns (motifs) that occur in both time series with arbitrary time delays. }}
\end{framed}

In time series analysis, a \textbf{following relation} can be defined as a relation of two time series that one time series initiates patterns and another time series follows the similar patterns with some time delay~\cite{FLICAtkdd,mFLICASDM}. The following relation can be infer from time series regarding who follows whom using Dynamic Time Warping (DTW)~\cite{sakoe1978dynamic} by the framework proposed in~\cite{FLICAtkdd,mFLICASDM,mFLICASoftX}. However, there an issue when one time series is not completely follows another time series; a follower imitates only some patterns of its leader while the leader also imitates some patterns of its follower.  For instance, in Fig~\ref{fig:ExSimpleFoll}, the follower simply imitates a pattern of leader. In contrast, in Fig.~\ref{fig:ExMulMotifFoll}, there are four patterns or motifs that are followed by some time series. In this case, the follower imitates the motifs A and B of the leader, while the leader also follows the motifs C and D of the follower. This issue breaks the concept of the traditional following relation that assume the role of absolute leader and follower. 

\begin{figure}[ht!]
\includegraphics[width=1\columnwidth]{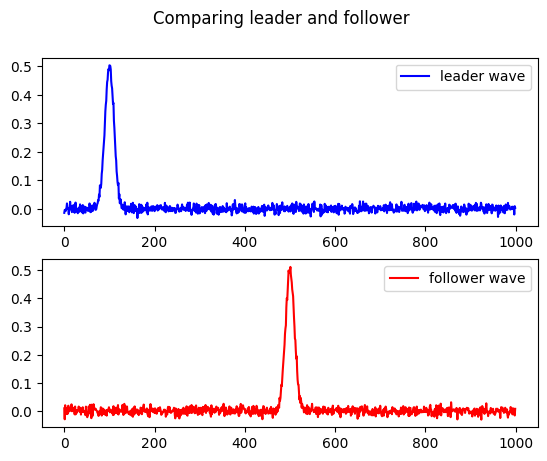}
\caption{Time series of following relation that the follower (red) follows the leader (blue) with a single motif.} 
\label{fig:ExSimpleFoll}
\end{figure}

\begin{figure}[ht!]
\includegraphics[width=1\columnwidth]{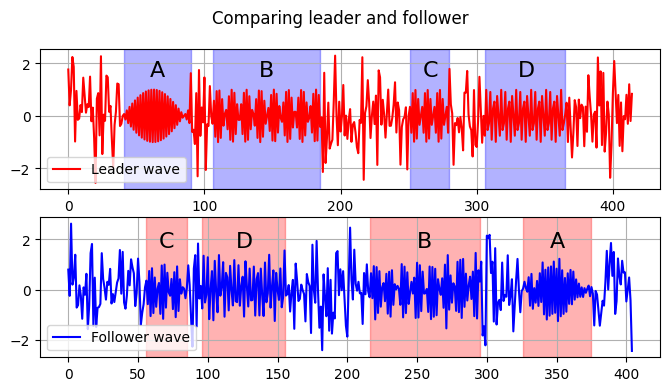}
\caption{Time series of following relation that the follower (blue) follows the leader (red) with multiple motifs (highlight intervals). In some motifs, the follower follows the leader (A and B motifs), while in some motifs, the leader follows the follower (C and D motifs.)} 
\label{fig:ExMulMotifFoll}
\end{figure}

To solve the issue, in this work, we propose the concept of the following-motif set. Specifically, instead of finding a following relation, we propose to find a set of motifs and roles of time series of each motif, which can tell us regarding 1) who initiates the motif, 2) who follows the motif, and 3) the time delay between initiation and imitation. We provide a problem formalization of inferring a following-motif set along with the framework using the matrix profile framework~\cite{zhu2016matrix} to infer it in time series. Our framework is capable of:
\squishlist
\item {\bf Inferring arbitrary-lag following-motif sets:} our methods can infer a set of motifs that occur in both time series with arbitrary time delays; 
\item {\bf Quantifying arbitrary-lag following-motifs:} our methods can report the similarity of following motifs between two time series for arbitrary delays. 
\squishend

We evaluate the performance of our framework using simulation datasets and manifests the use cases of our framework in two real-world datasets: the time series of sounds that two singers sings the same piece of lyrics, and the time series of cryptocurrency prices.  Our framework performed batter than baselines in most cases. The proposed framework can be utilized in any kind of time series. 

\section{Related work}
Most works of following relations are in the realm of time series that have real-world applications (e.g. flock of birds~\cite{andersson2008reporting}, behavior of pedestrians~\cite{kjargaard2013time}, a group of baboons~\cite{FLICAtkdd,mFLICASDM}, a school of fish~\cite{FLICAtkdd}, NASDAQ stock market~\cite{FLICAtkdd}). However, the definition of following relation can be different.

The work in~\cite{andersson2008reporting} defined the following pattern as a situation when one object is in the front of other individual. The work in~\cite{kjargaard2013time,FLICAtkdd,mFLICASDM,mFLICASoftX} defined a following pattern as a situation when one time series is similar to another time series with some time delay. The works~\cite{kjargaard2013time} in utilized the sliding window similar to the cross correlation method~\cite{conlon2009cross} while the works in~\cite{FLICAtkdd,mFLICASDM,mFLICASoftX} utilized the Dynamic time warping (DTW)~\cite{sakoe1978dynamic} to infer a following relations, which makes the framework more robust results since only utilizing sliding window cannot handle distortion between time series well. 
\\
However, the gap still remain since all these work cannot find multiple motifs from a pair of time series X and Y that some motifs might be initiated by X and be followed by Y but some might be initiated by Y and be followed by X. Moreover, the following-relation inference framework that deploys DTW or time-window sliding approaches cannot be scaled well. Especially, for DTW, we need a warping path from a dynamic programming matrix of DTW to find a following relation, which prohibits the utilizing of Keogh's bound~\cite{keogh2005exact} to reduce the computational time of DTW. To solve this problem of finding following motifs with who follows whom, we need a method to infer motifs efficiently.
\\
The motif discovery problem was introduced in the work~\cite{chiu2003probabilistic}, and since then there are several methods have been proposed to find motifs in time series~\cite{van2023variable,zhu2016matrix,liu2023enhanced,li2010approximate,linardi2018matrix}.  One of the well-known efficient methods to retrieve motifs from time series is to use the representation called \textit{Matrix Profile} introducing in~\cite{yeh2016matrix,zhu2016matrix}. The matrix profile inference frameworks were developed to discover motifs in time series in~\cite{yeh2016matrix,zhu2016matrix}. The frameworks were scalable to compute million data points to find exact motifs in reasonable time, which makes in one of the efficient framework to deploy for finding motifs. While matrix profile can be efficiently and scalably utilized to find many types of patterns in time series (e.g. motifs~\cite{zhu2016matrix}, discord~\cite{yeh2016matrix}, Shapelets~\cite{yeh2016matrix}, anomaly patterns~\cite{tafazoli2023matrix,lu2022matrix}), to the best of our knowledge, the matrix profile has not been utilized to solve the problem of finding following motifs yet.

In this work, we formalize \VLFMIP and propose a framework to find a following motifs in time series based on the matrix profile method in~\cite{yeh2016matrix}. In the next section, the problem formalization and related details are provided.

\section{Problem Statement and Algorithm}
\begin{definition}[Time series]
\label{def:TS}
A time series $U=(u_1,\dots,u_n)$  is a length-$n$ sequence of real values $(u_1,\dots,u_n)$ where $u_i\in\mathbb{R}$ is a value at time step $i$.
\end{definition}

\begin{definition}[Subsequence of time series]
\label{def:STS}
A subsequence $U_{t,m}$ of time series $U$ is a subsequence of $U$ starting at $t$ and ending at $t+m$:   $U_{t,m}=(u_t,\dots,u_m)$ where $1\leq t\leq m\leq n$ .
\end{definition}

\begin{definition}[Distance profile]
\label{def:DP}
Let $Q=(q_1,\dots,q_m)$ and $U=(u_1,\dots,u_n)$ be time series. A distance profile $D_Q(U)$ is a sequence vector of Euclidean distance of $Q$ over all possible subsequence $U_{t,m}$  of $U$ that have their length $m$. Formally, $D_Q(U)=(d_1,\dots,d_{n-m+1})$ where $d_t=\lVert Q-U_{t,m} \rVert_2$ 
\end{definition}

\begin{definition}[Similar join set]
Let $W$ and $U$ be time series and $\omega$ be a time window. Similar join set $J_{W,U}$ is a set of pairs of subsequences of $W$ and their set of $1$-nearest neighbor subsequences in $U$. 
Formally, suppose we have $d_{1nn}^U(W_{t,\omega})$ as a minimum distance between $W_{t,\omega}$ and its $1$-nearest neighbor subsequences in $U$.  

$J_{W,U}=\{ (W_{t,\omega},\hat{U}_t ) \}$ where

\begin{equation*}
    \hat{U}_t=argmin_{U_{\hat{t},\omega} \subseteq U}  (\lVert W_{t,\omega}-U_{\hat{t},\omega } \rVert_2 = d_{1nn}^U(W_{t,\omega})).
\end{equation*}

\end{definition}

\begin{definition}[Matrix profile~\cite{zhu2016matrix}]
A  matrix profile $P_{U,W}$ is a Euclidean distance between each subsequence $U_{t,\omega}$ and its $1$-nearest neighbor subsequences in $W$. Formally,
\begin{equation*}
    P_{W,U}=(d_{1nn}^U(W_{1,\omega}),\dots,d_{1nn}^U(W_{n-\omega+1,\omega})).
\end{equation*}

\end{definition}

\begin{definition}[variable-lag following motifs set]
\label{def:VLfm}
Let $W$ and $U$ be time series and $\omega$ be a time window. 
We call 
\begin{equation*}
    S_{W\preceq U}=\{ (W_{t,\omega},U_{t+\Delta_t,\omega})|W_{t,\omega}=U_{t+\Delta_t,\omega}\}
\end{equation*}

a set of variable-lag-following motifs of $U$ following $W$.
\end{definition}

According to Def.~\ref{def:VLfm}, we call variable-lag following motifs since each motif in W might be followed by some motif in U with arbitrary time delay and a different motif might be followed with a different time delay.

\begin{problem}
    \SetKwInOut{Input}{Input}
    \SetKwInOut{Output}{Output}
    \Input{Time series $W$ and $U$ as well as a time window $\omega$.}
    \Output{A set of following motifs of  $U$ following $W$: $S_{W\preceq U}$.}
    \caption{\VLFMIP}
\end{problem}

\setlength{\intextsep}{0pt}
\IncMargin{1em}
\begin{algorithm2e}
\caption{VLfollowingMotifsInferFunction}
\label{algo:VLfollowingMotifsInferFunction}
\SetKwInOut{Input}{input}\SetKwInOut{Output}{output}
\Input{ Time series $W$ and $U$ as well as a time window $\omega$. }
\Output{A set of following motifs of  $U$ following $W$: $S_{W\preceq U}$. }
\begin{small}
\SetAlgoLined
\nl Initial: $S_{W\preceq U} = \emptyset$\;
\nl \For{ each subsequence $W_{i,\omega},\subseteq W$}
{

\nl    Inferring  $\hat{U}_i=argmin_{U_{\hat{i},\omega } \subseteq U} (\lVert W_{i,\omega}-U_{\hat{i},\omega } \rVert_2 = d_{1nn}^U(W_{t,\omega}))$\;
\nl     \For{ each subsequence $U_{j,\omega},\subseteq \hat{U}_i$}
        {
\nl            Let $\Delta_i=j-i$\;
            \If{$\Delta_i\geq 0$ and $U_{j,\omega} = W_{i,\omega}$}
            {
\nl             $S_{W\preceq U}\leftarrow S_{W\preceq U} \cup \{(W_{i,\omega},U_{j,\omega})\} $\;
            }
        }
}
\nl Return $S_{W\preceq U}$\;
\end{small}
\end{algorithm2e}\DecMargin{1em}

\begin{proposition}
Algorithm~\ref{algo:VLfollowingMotifsInferFunction} is a solution of \VLFMIP.
\label{prop:siggamewin}
\end{proposition}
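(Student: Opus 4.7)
The plan is to show Algorithm~\ref{algo:VLfollowingMotifsInferFunction} is both sound (every pair it outputs lies in $S_{W\preceq U}$) and complete (every pair of $S_{W\preceq U}$ is output). Termination is immediate, since both loops range over finitely many subsequences of length $\omega$, so the only content is the set-equality argument.

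First I would verify soundness by a direct reading of the conditional and insertion steps. A pair $(W_{i,\omega},U_{j,\omega})$ is added to $S_{W\preceq U}$ only when the guard $\Delta_i = j-i \geq 0$ and $U_{j,\omega}=W_{i,\omega}$ both hold; setting $t=i$ and $\Delta_t=j-i$ recovers exactly the membership condition of Def.~\ref{def:VLfm} (with the ``following'' convention $\Delta_t\geq 0$ implicit in the phrasing ``$U$ follows $W$''). Since nothing else is ever inserted, every output pair is a genuine variable-lag following motif.

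For completeness, take an arbitrary $(W_{t,\omega},U_{t+\Delta_t,\omega}) \in S_{W\preceq U}$. Equality of the two subsequences gives $\lVert W_{t,\omega}-U_{t+\Delta_t,\omega}\rVert_2 = 0$, which is the minimum possible Euclidean distance, so $d_{1nn}^{U}(W_{t,\omega})=0$ and $U_{t+\Delta_t,\omega}$ lies in the $1$-nearest-neighbor collection $\hat{U}_t$ computed on line~3 when the outer loop visits $i=t$. The inner loop then enumerates this subsequence, the guard succeeds (the lag is $\Delta_t\geq 0$ and the equality holds), and the pair is inserted. Combining the two directions yields the claimed equality between the returned set and $S_{W\preceq U}$.

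The main obstacle I foresee is the tie-breaking ambiguity in the $\argmin$ on line~3: the customary convention returns a single minimizer, yet completeness requires enumerating the entire set of minimizers so that every subsequence of $U$ that equals $W_{i,\omega}$ is visited. I would handle this by explicitly interpreting $\hat{U}_i$ as the full $\argmin$ set, consistent with the similar join set definition, and observing that the matrix profile machinery naturally surfaces all tied minimum-distance matches. Under this reading, the soundness and completeness arguments above combine to establish the proposition.
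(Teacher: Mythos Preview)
Your proposal is correct and follows essentially the same soundness/completeness (forward/backward) structure as the paper's own proof, including the key observation that an exact match has zero Euclidean distance and therefore belongs to the $1$-nearest-neighbor set $\hat{U}_i$. Your explicit discussion of the $\argmin$ tie-breaking issue is a welcome refinement that the paper glosses over by silently treating $\hat{U}_i$ as the full set of minimizers.
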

\begin{proof}
 (Forward direction) \\
Given time series $U,W$ and a time window $\omega$, for each $W_{i,\omega}$ in $W$, the algorithm finds its set of 1-nearest-neighbor subsequences $\hat{U}_i$ and it only includes a pair $(W_{i,\omega},U_{j,\omega})$ to $S_{W\preceq U}$ when the  motif $W_{i,\omega}$ appears the same time or before $U_{j,\omega}$, $j-i\geq 0$, and both subsequence must be equal. Since the algorithms iterates the same process for every subsequence $W_{i,\omega}$, the result $S_{W\preceq U}$ consistent with the definition $S_{W\preceq U}=\{ (W_{t,\omega},U_{t+\Delta_t,\omega})|W_{t,\omega}=U_{t+\Delta_t,\omega}\}$ in Def.~\ref{def:VLfm}.

(Backward direction) 
Given $S_{W\preceq U}=\{ (W_{ t,\omega},U_{t+\Delta_t,\omega})|W_{t,\omega}=U_{t+\Delta_t,\omega}\}$, assuming that there is $(W_{ \hat{t},\omega},U_{\hat{t}+\Delta_{\hat{t}},\omega}) \in S_{W\preceq U}$  but it is not a solution of the algorithm. 

The for loop in line 2 of Algorithm~\ref{algo:VLfollowingMotifsInferFunction} guarantees that the algorithm search for all possible subsequence of $W$, hence, $W_{\hat{t},\omega}$ must be analyzed by the algorithm. 

Suppose $\hat{U}_{\hat{i}}$ is a set of all 1-nearest-neighbor subsequences of $W_{\hat{t},\omega}$. Since $ U_{\hat{t}+\Delta_{\hat{t}},\omega} = W_{ \hat{t},\omega} $, $U_{\hat{t}+\Delta_{\hat{t}},\omega}$ must be a part of $\hat{U}_{\hat{i}}$ and being a part of the algorithm solution, which contradicts with the initial assumption.

Hence, Algorithm~\ref{algo:VLfollowingMotifsInferFunction} is a solution of \VLFMIP.

\end{proof}
\section{Methods}

Our proposed Following motif method measures leadership of leader time series by classifying time steps of following motif relation on each time series and subtracting following motif indices arrays of follower and leader time series to have a value. The greater the value is above zero, the more likely it is that the time series acts as a leader. 

\subsection{Matrix Profile inference}

For our method, the ability of Matrix Profile algorithm to find the most similar parts of two time series can be used to search for the same motif in both leading time series and following time series. 

To find the most similar parts of two inputted time series, Matrix Profile inference algorithm results in Matrix Profile (MP), which represents z-normalized Euclidean distance every subsequence and its (non-trivial) nearest neighbor within the same time series, and Matrix Profile Index (MPI), which represents time steps of the most nearest part of the inputs.

\setlength{\intextsep}{0pt}
\IncMargin{1em}
\begin{algorithm2e}
\caption{Matrix Profile inference algorithm}
\label{algo:matrixprofile}
\SetKwInOut{Input}{input}\SetKwInOut{Output}{output}
\Input{ Query Time series \( T_q \), Reference Time series \( T_{ref} \), subsequence length \( m \) }
\Output{Matrix Profile \( MP \), Matrix Profile Index \( MPI \)}
\begin{small}
\SetAlgoLined
\nl Initialize: \( MP \leftarrow \infty \), \( MPI \leftarrow 0 \)\;
\nl \For{ each sub-sequence \( T_{q(i,m)} \subseteq T_q \)}
{
\nl    Initialize: \( D \leftarrow \emptyset \)\;
\nl    \For{ each sub-sequence \( T_{ref(j,m)} \subseteq T_{ref} \)}
        {
\nl        \(
   D_j \leftarrow \sqrt{\sum_{k=0}^{m-1} \left( \frac{T_q[i+k] - \mu_{T_{q(i,m)}}}{\sigma_{T_{q(i,m)}}} - \frac{T_{ref}[j+k] - \mu_{T_{ref(j,m)}}}{\sigma_{T_{ref(j,m)}}} \right)^2}
   \)\;
        }
\nl    \For{ each element \( D_j \) in \( D \)}
        {
\nl            \If{\( D_j < MP[i] \) \textbf{and} \( i \neq j \)}
            {
\nl             \( MP[i] \leftarrow D_j \)\;
\nl             \( MPI[i] \leftarrow i \)\;
            }
        }
}
\nl Return \( MP, MPI \)\;
\end{small}
\end{algorithm2e}
\DecMargin{1em}

To illustrate how to perform matrix profile inference, the Matrix profile inference algorithm~\ref{algo:matrixprofile} is one example (STAMP) of many algorithms that can perform the task. Algorithm~\ref{algo:matrixprofile} does window sliding with reference time series by a window-size query, which cuts from the query time series, to calculate z-normalized euclidean distance in each slide. Thus, one sliding of the query results in a distance value for each time step of the reference time series. Excluding trivial matching time steps, Matrix profile and Matrix profile indices array are updated the lower distance value if the distance value is lower than their current element. After the matrix profile algorithm finishes every sliding window of cutting query and calculating distance for each query and reference time series.  Then, Matrix profile and the indices array, which has its size as the length of reference time series subtracted with window size, contain the shortest distances and the most similar query starting indices.  

The reason of z-normalization in matrix profile is to have the same size of time series to compare shapes of the two time series due to its main focus on searching similar shapes on its inputted time series.


\subsection{Following motif method}

Following motif method can provide set of leading motifs' indexes and following motifs' indexes, and determine which time series leads, from the two inputted time series: leader and follower time series. The method is composed of Matrix Profile algorithm, some set theory operations, percentile and average mean. 

\begin{algorithm2e}
\caption{Following Motif Method}
\label{algo:followingmotifmethod}
\SetKwInOut{Input}{input}\SetKwInOut{Output}{output}
\Input{Leader time series \( \text{leader\_ts} \), Follower time series \( \text{follower\_ts} \), Default value: Time window \( \text{wd} = 300 \), Threshold of Motif and Discord \( \text{thresh} = 0.01 \)}
\Output{Leading Result,Leading Result Value,and Index Arrays of Following Motif Relation for Leader and Follower time series}
\begin{small}
\SetAlgoLined
\nl Initialize: \( index_{motif} = \{\emptyset,\emptyset\}\)\;
\nl \( MP_{leader},MPI_{leader} \leftarrow \text{MatrixProfile}(follow_{ts}, wd, lead_{ts})\)\;
\nl \( MP_{follower},MPI_{follower} \leftarrow \text{MatrixProfile}(lead_{ts}, wd, follow_{ts})\)\;
\nl \( MP_{array} = \{MP_{follower},MP_{leader}\}\)\;

\nl \For {each sub-sequence \( MP_{i} \subseteq MP_{array} \)}
{
\nl     \(motif_{percentile} = \ (50-thresh)\text{-th percentile value of } MP_{i} \)\;
\nl     \(index_{motif}[i] \leftarrow \{ i \mid MP_{i} \textless motif_{percentile}, i \in \{0, \ldots, \text{len}(MP_{i})-1\} \)\;
}

\nl \( index_{motif}[1] = index_{motif}[1][1:\text{len}(index_{motif}[0])]\)\; 
\nl \( index_{difference} = index_{motif}[0] - index_{motif}[1]\)\; 
\nl \( result_{leadvalue} \leftarrow \frac{1}{\text{len}(index_{difference})} \sum_{i=1}^{\text{len}(index_{difference})} (index_{difference}[i])\)\;  
\nl \(result_{lead} = (result_{leadvalue} > 0)\)\;  

\nl Return \( result_{lead},result_{lead value}, index_{motif} \); 
\end{small}
\end{algorithm2e}\DecMargin{1em}

In Algorithm~\ref{algo:followingmotifmethod}, Matrix Profile's algorithm results the array of the shortest distance between a part, whose size is time window, of reference time series and query time series. The lower value of the shortest distance array at some indices, the more likely the motifs exist at the indices. Thus, Matrix Profile of leader time series, which has leader time series is reference time series, provides low values at the indices that the time series contains motifs.The similar case occurs for the follower time series. 

In line 2-3 of Algorithm~\ref{algo:followingmotifmethod}, \(MP_{leader},MP_{follower}\) Matrix Profiles of leader and follower time series are the arrays that can be extracted for the areas of motifs on both leader and follower time series. The algorithm version of Matrix Profile's algorithm is STAMP algorithm due to its performance in our problem. 

Using Matrix Profile algorithm for having motifs found area on the two time series in Algorithm~\ref{algo:VLfollowingMotifsInferFunction} and Algorithm~\ref{algo:followingmotifmethod} are similar but the difference is that Algorithm~\ref{algo:VLfollowingMotifsInferFunction} line 6 uses the loop for collecting the most similar parts, which the follower's part follows the leader's part, after every distance profile calculation. Meanwhile, Algorithm~\ref{algo:followingmotifmethod} uses the two Matrix Profiles with percentile threshold. 

The reason of using the threshold in Algorithm~\ref{algo:followingmotifmethod} is to prevent mismatching, excluding trivial match, of the most similar parts of the two time series. In many cases, many leading motifs on leader time series are similar enough and result in following the wrong motifs on follower time series; the following motifs mismatch the another similar leading motif, which is not its own leading motif. 

Moreover, similarity measurement of the two parts of the time series cannot be $U_{j,\omega} = W_{i,\omega}$ in the noisy and real-world setting because the following lag is arbitrary of every time steps, which provides chance of different length of the two parts on the two time series and the varied shape of the following motif. Those issues are also a reason of the mismatching. If we uses similarity measurement for different length of two Time Series such as Dynamic Time Warping, and Longest Common Subsequence instead. It will increase computational complexity due to their characteristic of dynamic programming. 

In line 4, we pack \(MP_{follower}\) and \(MP_{leader}\) in an array for the loop purpose. Thus, the first and second of the array's index are associated with follower and leader respectively. 

To extract time steps of the leader and follower where motif exists, in between line 6 and 7, we determine the thresholds of motif with the assumption that the shortest distance, where motifs are found on the two following time series, is from 0th to (50-thresh)th percentile value of Matrix Profile. The parameter thresh is used for tuning the threshold to cover all of the motif area. Subsequently, we use the thresholds to create new indices arrays of motifs. 

In line 8, if motifs on leader time series leads the motifs' pattern on follower time series, the length of motifs on the leader must be equaled to the length of the motifs on the follower. 

To determine leader and follower time series by the method, the method element-wise subtracts \(index_{motif}\) of the follow by \(index_{motif}\) of the leader as in line 9 because the indexes of the motifs on the follower are behind the leader's indexes. The more subtracted elements are positive, The more likely the inputted leader time series leads another time series.  

In line 10, result of leading in value is assigned by average mean of the index difference array. The leading value is sum of the difference array, that have its center at zero to separate two classes(leader and follower), and scale down the value by dividing length value of the difference array. The further leading value to zero, the more confident the inputted leader is the real leader time series. If the leading value is more than 0, the inputted leader is real leader. 

Default input parameters - such as time window, gap of motif (percentage of inputted time series' length) - are the best set of parameters for the synthesis and real data set of our experiment. 

In the aspect of computational cost, the proposed algorithm has time complexity as $O(n^2 log(n))$ since we use STAMP~\cite{yeh2016matrix}  as the main computation part where $n$ is the length of time series. Moreover, note that we could also use SCRIMP++~\cite{zhu2018matrix} which is only $O(n^2)$. Both STAMP and SCRIMP++ allow computation both as anytime and contract algorithms~\cite{lopez2014optimal}. In practice this implies that datasets with up to say ten million datapoints can easily be analyzed in realtime interactive sessions.

\section{Experiments}

The methods were evaluated with three types of synthetic time series simulation (1,000 time series for each) and two real time series of music and Cryptocurrency domains. The synthetic and real world data sets were tested with two tasks: Determining which time series leads or follows, and predicting time steps of following motifs task.

\subsection{Experimental setup}

To determine which time series leads or follows, we compared performance of the following motif method with three existing methods, which have their parameter leader and follower time series: FLICA~\cite{FLICAtkdd,mFLICASoftX}, Maximum Cross Correlation, and Variable-Lag Transfer Entropy (VL-Transfer Entropy)~\cite{10.1145/3441452}. 

For the experiment of those methods with every pair of synthetic and real world time series, suppose A is leading time series and B is a following time series as a ground truth (GT). 1), if the predicted result (PR) agrees with GT, then the pair of time series is counted as a true positive case. 2) if A does not follow B, then it is counted as a false negative case. In contrast, 3) if PR states that B follows A, which contradicts GT, then it is counted as a false positive case. Lastly, 4) if B does not follow A, then it is counted as a true negative case.

To predict the following motif time steps, following motif method is the only method that results in a set of following motifs time steps. If the predicted following motif time step matches following motif ground truth, one true positive is counted. Otherwise, if the predicted following motif time step does not match following motif ground truth, it is counted as a false negative. If the predicted non-following motif time step matches non-following motif ground truth, one true negative is counted. Otherwise, if the predicted non-following motif time step matches the following motif time step, it is counted as a false positive.

\subsection{Synthetic time series simulation}

We have three synthetic time series simulations:
Fixed length single motif simulation, Continuous fixed length multiple motif simulation, Noncontinuous fixed length multiple motif simulation. Each simulation has 1,000 pairs of Leader and Follower Time Series.  generated from random seed 0 to 999. The patterns in each simulation are created by Sine function:

\begin{figure}
    \centering
    \includegraphics[width=1\linewidth]{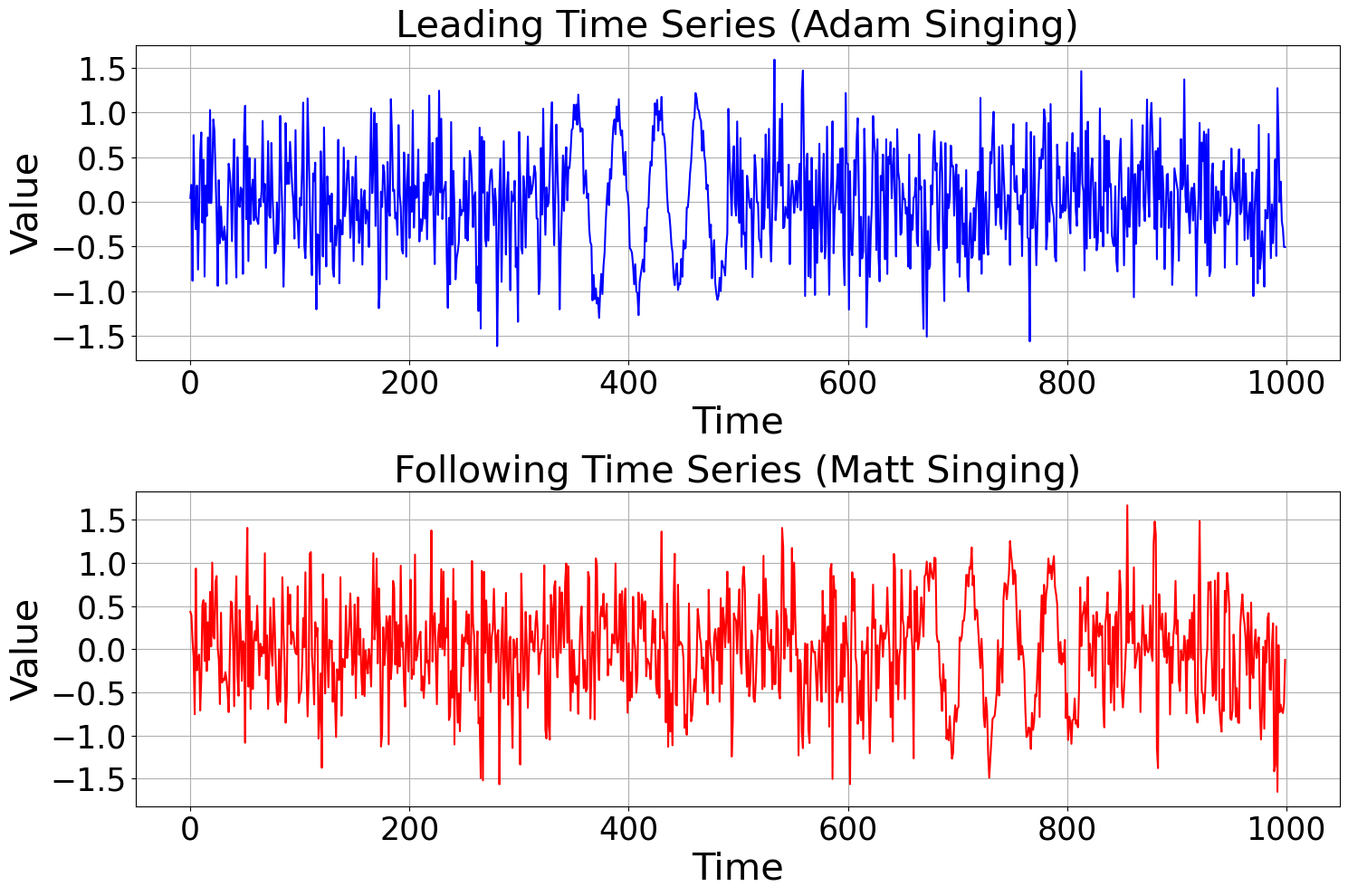}
    \caption{Fixed length single motif simulation with random seed 0}
    \label{fig:dataset1}
\end{figure}

\begin{figure}
    \centering
    \includegraphics[width=1\linewidth]{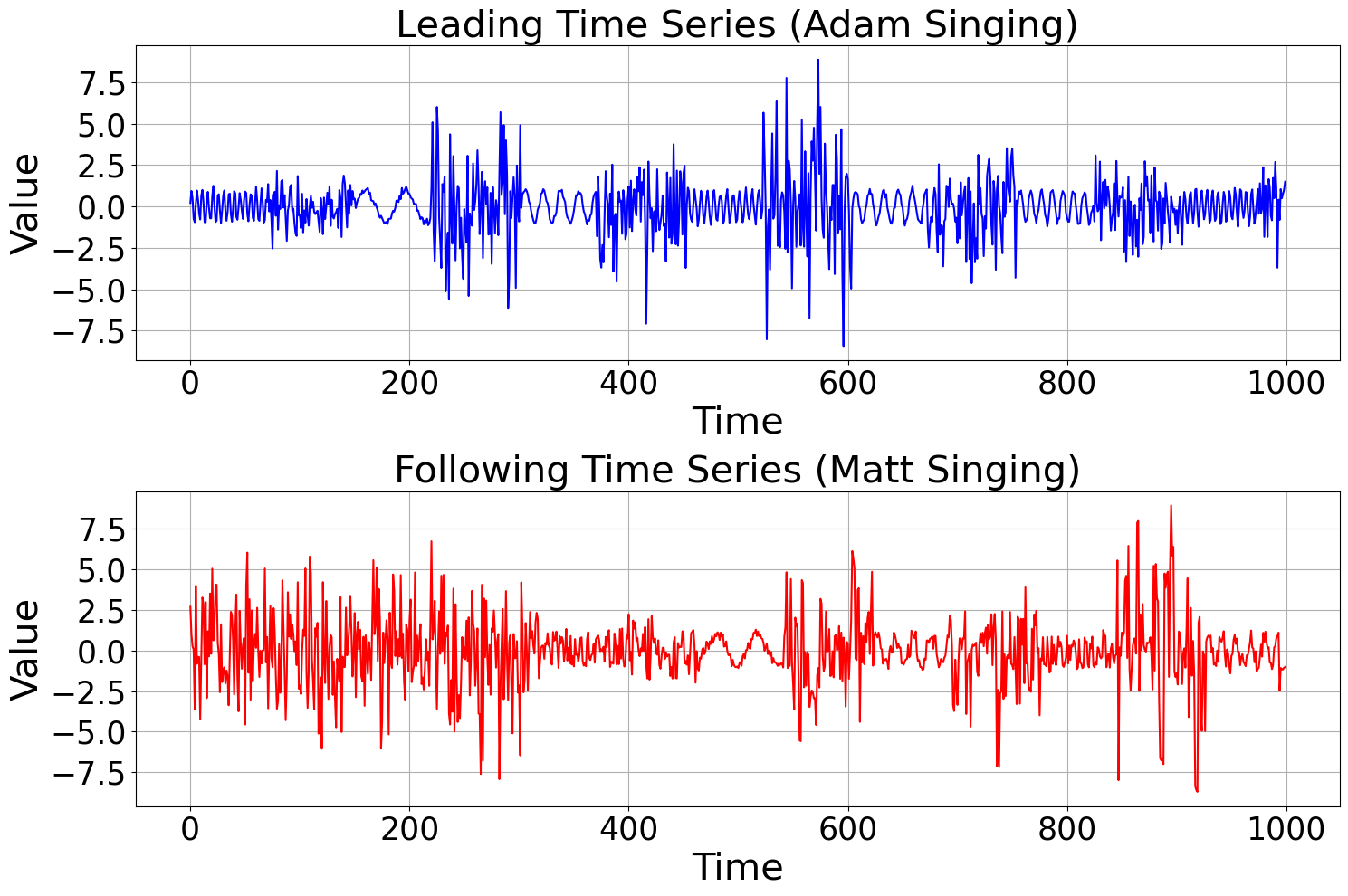}
    \caption{Continuous fixed length multiple following motifs simulation with random seed 1}
    \label{fig:dataset2}
\end{figure}

\begin{figure}
    \centering
    \includegraphics[width=1\linewidth]{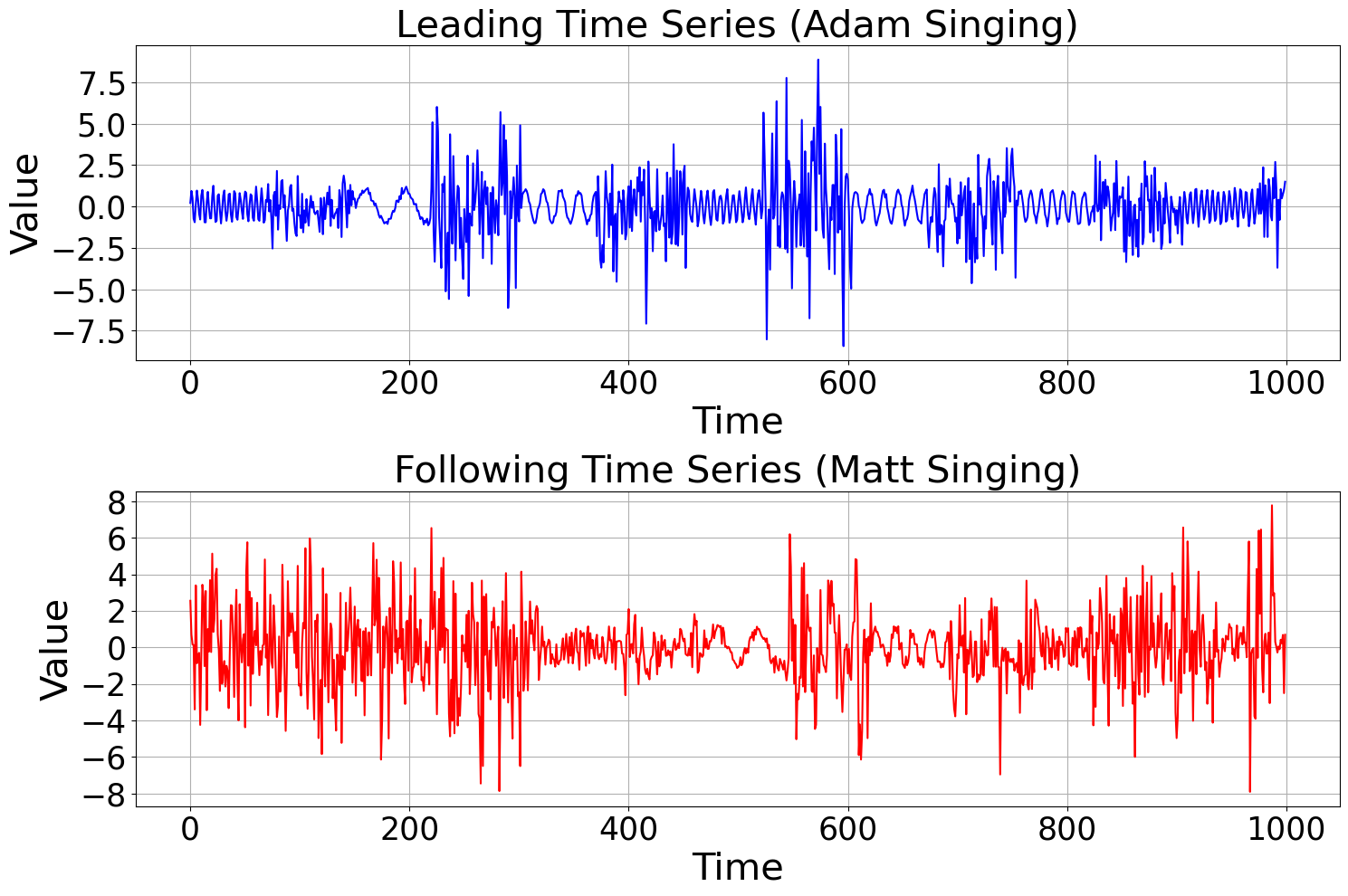}
    \caption{Noncontinuous fixed length multiple following motifs simulation with random seed 1}
    \label{fig:dataset3}
\end{figure}


\begin{equation}
\begin{aligned}
\left\{ n \cdot \sin\left(\frac{x_1}{x_2}\right) \right\} \quad & \text{for } n \in \{0, 1, 2, \ldots, \text{motif length}\}, \\
\text{where } & \underbrace{x_1}_{\text{in } [0.2, 2.0]}, \underbrace{x_2}_{\text{in } [3.00, 5.00]}.
\end{aligned}
\label{eq:MotifEquation}
\end{equation}

\squishlist
\item {\bf Single Motif:} For each pair of time series, we generate one motif that appears in the leader time series first, then, the same motif appears in the follower time series later with some time delay. 
\item {\bf Continuous Following:} The continuous following is following motif patterns that does not break from noisy interruption, of the follower time series. The motif length is random from
\[\lceil \text{length}(\text{time series}) \times \text{percent}(2.5\% \text{ to } 10\%) \rceil\]  and the length of non-motif is 
\[ \lceil \text{length}(\text{motif}) \times \text{percent}(120\%) \rceil \].
The maximum motif patterns that leader can have is 10. 
\item {\bf Noncontinuous Following:} The noncontinuous following data set is similar to continuous following data set except adding more noisy part to interrupt the half way of time steps in the following part:
\[
\begin{pmatrix}
\text{First half following part} & \text{Interrupting part} & \text{Last half following part}
\end{pmatrix}
\]
The size of new noisy part is 10 to 15 percent of the entire time series and it is generated by normal distribution random with mean = 0 and standard deviation = 0.5. The size of the follower is same as continuous following data set. Because after adding the new noisy part, we trim the part that beyond its original size. 
\item {\bf Mixed Types:} The mixed type data set (3,000 pairs of time series) is composed of the Single Motif (1,000 pairs), Continuous Following (1,000 pairs), and Noncontinuous Following data set (1,000 pairs). 
\squishend

For every type of data set, the motif pattern is generated by equation~\ref{eq:MotifEquation}. On follower time series, the following starting time steps starts at \[
\lceil \text{length}(\text{leader}) \times \text{percent}(20\% \text{ to } 35\%) \rceil
\] 
The non-following motif in both time series are filled with random noise, generated by normal distribution random with mean = 0 and standard deviation = 0.5. The lag of follower is varied from the leader in range of 0 to 3. The follower is added normal distribution random with mean = 0 and standard deviation = 0.1 twice. Every random in each pair is determine by random seed from 0 to 999.


\subsection{Real world dataset}

We collect real world time series from music and Cryptocurrency domains: a part of the song Lost Stars from the 2014 finale of The Voice, performed by Adam Levine and Matt McAndrew, from second 12 to 45, and Cryptocurrency opening price trends of BTC/USDT and ETH/USDT in June 10,2023 in 30 minutes for 24 hours.  

\subsubsection{Vocal extracted sounds of Lost Stars by Adam Levine and Matt McAndrew}

The vocal sounds of Adam and Matts taking turns singing, as in Figure~\ref{fig:loststar}, are two time series that each contain the singing of "Please, don't see. Just a boy caught up in dream and fantasies", and "Take me hand Let's see where we wake up tomorrow" for Adam and the singing of "Please, see me. Reaching out for someone I can't see", and "Best laid plans, Sometimes are just a one night stand" for Matt. We collect the sound data from \url{https://www.youtube.com/watch?v=YFsGH5iToys} in .wav format, then we extracted vocals from the song, trimmed the vocal data from 0:12 to 0:45, and separated vocals of Adam and Matt by Audacity. The separated vocals are loaded by Python Librosa with sampling rate = 11025, then the vocals are sampled down by average groups of data points, with each group containing an equal fraction (5\%) of the data points.

\begin{figure}
    \centering
    \includegraphics[width=1\linewidth]{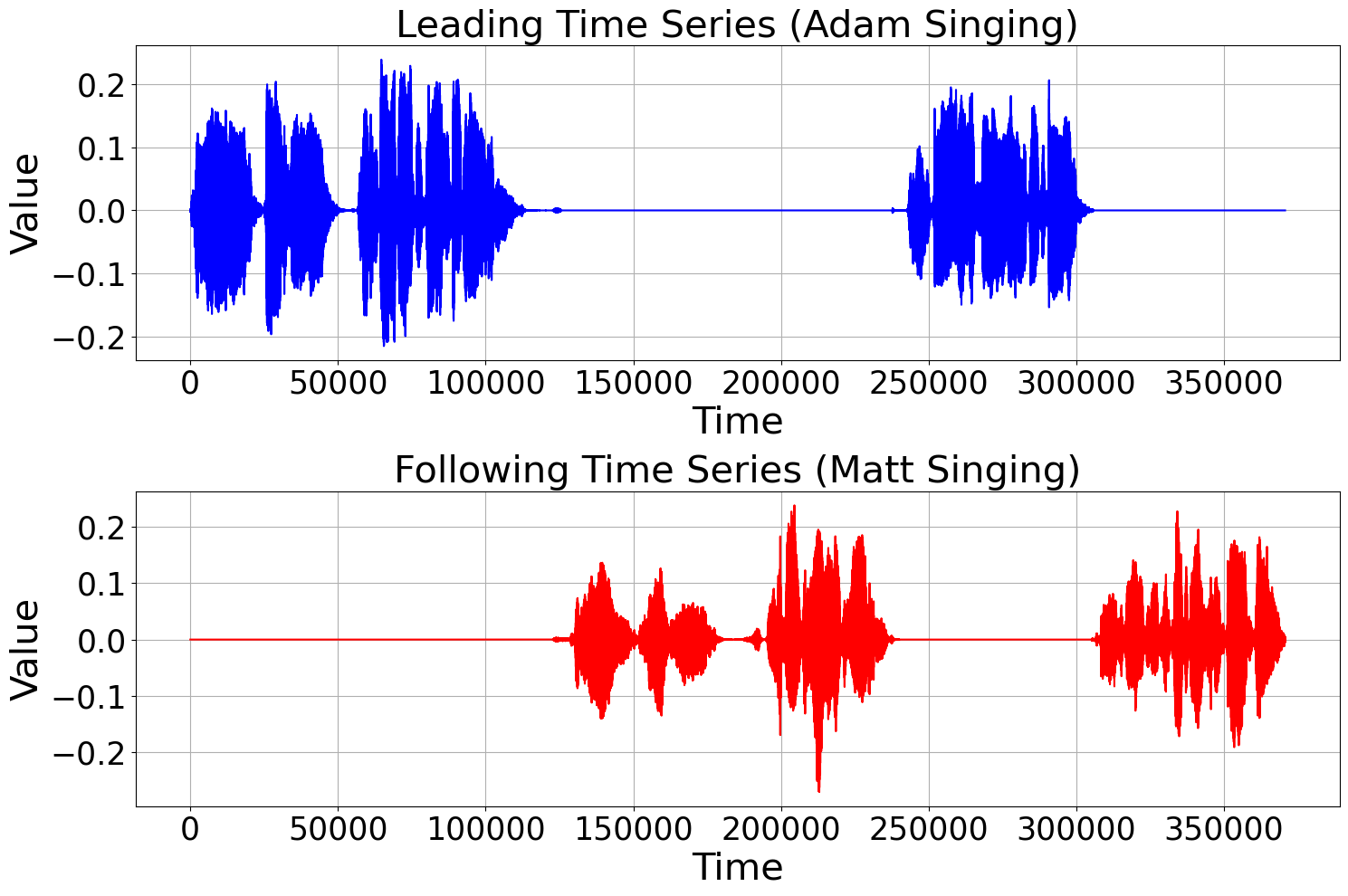}
    \caption{Real World Time Series1: Lost Star by Adam and Matt}
    \label{fig:loststar}
\end{figure}


\subsubsection{Cryptocurrency Opening Price of Bitcoin/USDT and Ether/USDT Time Series}

The opening-price data of the two Cryptocurrency are collected by 30 minutes of interval in June,10 2023 (1300 time-steps). In Figure~\ref{fig:bitcoin}, normalization of the two price Time Series is to compare the shape and lag of following relation. Most of the time steps of Bitcoin's price leads the shape trend of Ether's price. We collect the the data via Binance's API with Python library python-binance. 

\begin{figure}
    \centering
    \includegraphics[width=1\linewidth]{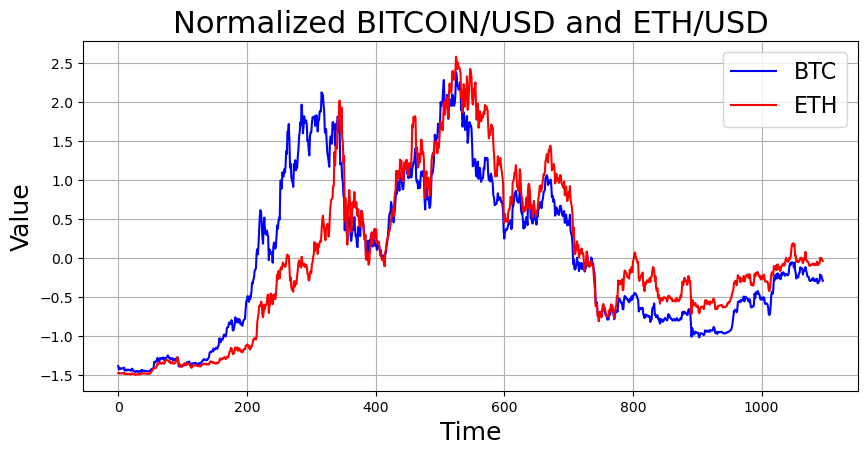}
    \caption{Real World Time Series2: Normalized BTC/USD and ETH/USD to compare leading and following interval}
    \label{fig:bitcoin}
\end{figure}

\section{Results}


\subsection{Synthetic time series simulation}

\subsubsection{Performance of methods for determining which time series leads or follows}

Table~\ref{tab:intervalInfTask} reports comparison of Following Motif Method, FLICA, Cross Correlation, and Varied lag-Transfer Entropy to result which time series in each pair of Following Relation time series is leader. Each data 
set contains 1,000 pairs of leader and follower and mixed type data set is merged of the other data sets. For VL-Transfer Entropy, we assume that the following motif patterns is occurred by the cause of leader time Series on the follower. Thus, the leader causes the follower to follows the leader's motifs.

Table~\ref{tab:intervalInfTask} illustrates the best performance of Following Motif Method in precision, recall, F1, and accuracy at the data sets of continuous following, non-continuous following and mixed types. For single motif data set, Following Motif Method beats Cross Correlation in Recall, F1, and Accuracy and have slightly lower Precision of 0.006. This is because finding the lag that produces the maximum value of the dot product is slightly more effective than finding similarity of time series' shape by normalized Gaussian distance function of matrix profile inside Following Relation Method, Even though thick noise is added and disturbs comparing shape of the following motif relation.    

Reason of similar values of metrics at every data set for FLICA is the similar number of True Positive and True Negative and of False Positive and False Negative. The reason of similar values also exists for the result of Cross Correlation at Continuous and Non-continuous following data sets. 

\begin{table*}[htbp]
\begin{tiny}
\centering
\resizebox{\textwidth}{!}{%
\begin{tabular}{|c|c|c|c|c|c|}
\hline
Dataset & \multicolumn{1}{c|}{Result} & Following Motif Method & FLICA & Cross Correlation & VL-Transfer Entropy \\
\hline
\multirow{4}{*}{Single Motif} & Precision & 0.995 & 0.693 & \textbf{1.000} & 0.774 \\
& Recall & \textbf{0.997} & 0.693 & 0.986 & 0.597 \\
& F1 & \textbf{0.996} & 0.693 & 0.993 & 0.674 \\
& Accuracy & \textbf{0.996} & 0.693 & 0.993 & 0.712 \\
\hline
\multirow{4}{*}{Continuous Following} & Precision & \textbf{1.000} & 0.732 & 0.872 & 0.241 \\
& Recall & \textbf{1.000} & 0.732 & 0.872 & 0.111 \\
& F1 & \textbf{1.000} & 0.732 & 0.872 & 0.152 \\
& Accuracy & \textbf{1.000} & 0.732 & 0.872 & 0.381 \\
\hline
\multirow{4}{*}{Noncontinuous Following} & Precision & \textbf{0.984} & 0.663 & 0.684 & 0.203 \\
& Recall & \textbf{0.983} & 0.663 & 0.684 & 0.085 \\
& F1 & \textbf{0.983} & 0.663 & 0.684 & 0.120 \\
& Accuracy & \textbf{0.984} & 0.663 & 0.684 & 0.376 \\
\hline
\multirow{4}{*}{Mixed Types} & Precision & \textbf{0.993} & 0.696 & 0.851 & 0.406 \\
& Recall & \textbf{0.993} & 0.696 & 0.847 & 0.264 \\
& F1 & \textbf{0.993} & 0.696 & 0.849 & 0.315 \\
& Accuracy & \textbf{0.993} & 0.696 & 0.850 & 0.489 \\
\hline
\end{tabular}%
}
\caption{Performance of methods for determining which time series leads or follows. The bold text represents the best performer.}
\label{tab:intervalInfTask}
\end{tiny}
\end{table*}

\subsubsection{Performance of Following Motif Method for the finding of time steps of following motifs task}

Finding time steps of following relation on both leader and follower Time Series "Which time steps is associated with the following motif relation?" can only be calculated by only Following Motif Method. 

Table~\ref{tab:intervalInTask1} reports result of the finding Following Relation time steps at data sets of single motif, continuous following motif, and mixed type. The significantly highest all metrics of single motif data set shows better performance to multiple following motif. The obviously higher accuracy than F1-score for every data set indicates that the Following Motif Method is effective to detect non-related the following motif relation time steps from the entire time series.



\begin{table}[ht]
  \begin{minipage}[t]{0.5\textwidth}
    \centering
    \begin{tabular}{|c|c|c|c|c|}
      \hline
      Dataset & Precision & Recall &  F1-score & Accuracy\\ \hline
      Single  & 0.703     & 0.802  & 0.703     & 0.915 \\ \hline
      Cont.   & 0.553     & 0.708  & 0.621     & 0.705\\ \hline
      Mixed   & 0.591     & 0.737  & 0.656     & 0.819\\ \hline
    \end{tabular}
    \caption{Performance of Following Motif Method for inferring time steps of following motifs task.}
    \label{tab:intervalInTask1}
  \end{minipage}%
  \hfill
  \begin{minipage}[t]{0.5\textwidth}
    \centering
    \begin{tabular}{|c|c|c|c|c|c|}
      \hline
      Noise & LeadVol & Precision & Recall &  F1-score & Accuracy\\ \hline
      0.000 & 0.168 & 0.916     & 0.862  & 0.889     & 0.862 \\ \hline
      0.001 & 0.215 & 0.960 & 0.891  & 0.924     & 0.906\\ \hline
      0.005 & \textbf{0.235} & \textbf{0.962} & \textbf{0.898} & \textbf{0.929} & \textbf{0.911}\\ \hline
    \end{tabular}
    \caption{Following Motif Method's performance for determining which time series leads and inferring following motif time steps with real vocal time series, added different degrees of noise.}
    \label{tab:intervalInTask2}
  \end{minipage}
\end{table}


\subsection{Real datasets}

The following motif method's result of determining which time series leads of real world data set are correct for both Lost Star singing and Bitcoin and Ether price time series. 

\subsubsection{Lost Stars Singing by Adam Levine and Matt McAndrew}

Table~\ref{tab:intervalInTask2} reports performance of following motif method in both tasks with different degrees of normal distribution noise of standard deviation 0.00, 0.001, and 0.005. The table shows the metrics for every degrees of noise above 0.86 for predicting following motif time steps. Figure ~\ref{fig:FRloststar1} illustrates green(predicted), red(ground truth) and yellow(the predicted overlapped ground truth) highlighted time steps as following motif relation interval on Leader (Adam singing) and Follower (Matt singing). The table and figures are resulted from Following Motif Method with parameters (window size 700 and percentile gap 10) with the three degrees of noise.

The LeadVol in the table~\ref{tab:intervalInTask2} are the leadership value of the inputted leader time series. The greater LeadVol from zero, the inputted leader likely is real leader. Every degree of noise shows significantly high LeadVol which determine correct leader to the ground truth. 

The method can find the following pattern in rhythm of singing even the leader and follower sing different lyrics. The following pattern is almost entirely detected and covered inside the highlighted by the method. The method determine Adam singing leading Matt singing.  

The ground truth following motif interval of leader are [0,5700] and [12100,15100]. For the follower, the interval are [6400,12000] [15300,18533].

\begin{figure}
    \centering
    \includegraphics[width=1\linewidth]{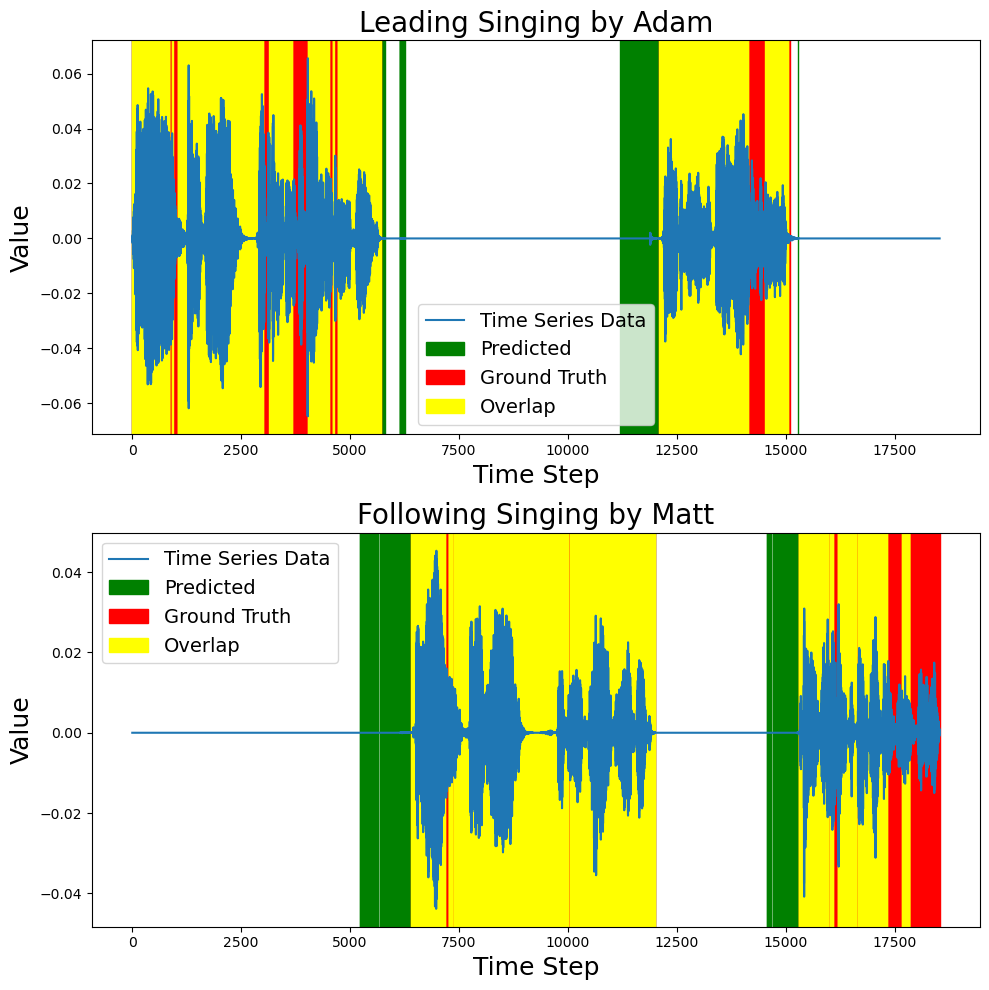}
    \caption{Following Relation Method indicates following ordered points on leader and follower Time Series of Lost Star Singing with Resolution (Gaussian noise Std. = 0.000)}
    \label{fig:FRloststar1}
\end{figure}



\subsubsection{Cryptocurrency: Bitcoin and Ether to USDT}

The following method correctly determine Bitcoin as leader and results in LeadVol of 0.0198. 

Figure~\ref{fig:FRbitcoin1} illustrate highlighted time steps as following motif relation interval on Leader (Bitcoin Opening Price Signal) and Follower (Ether Opening Price Signal), resulted from Following Motif Method with parameters (window size 150 and percentile gap 5). The time steps that Bitcoin is leading the Ether motifs,that can be detected in window size 150, are highlighted.  

In Figure~\ref{fig:bitcoin}, although the following lags of those two cryptocurrency are varied and small, the following relation method still can effectively detect the time steps of Bitcoin leading Ether's motifs. We concluded the performance by comparing Figure~\ref{fig:bitcoin} (groundtruth) with Figure~\ref{fig:FRbitcoin1} (result). The method determine Bitcoin leading Ether.   

\begin{figure}
    \centering
    \includegraphics[width=1\linewidth]{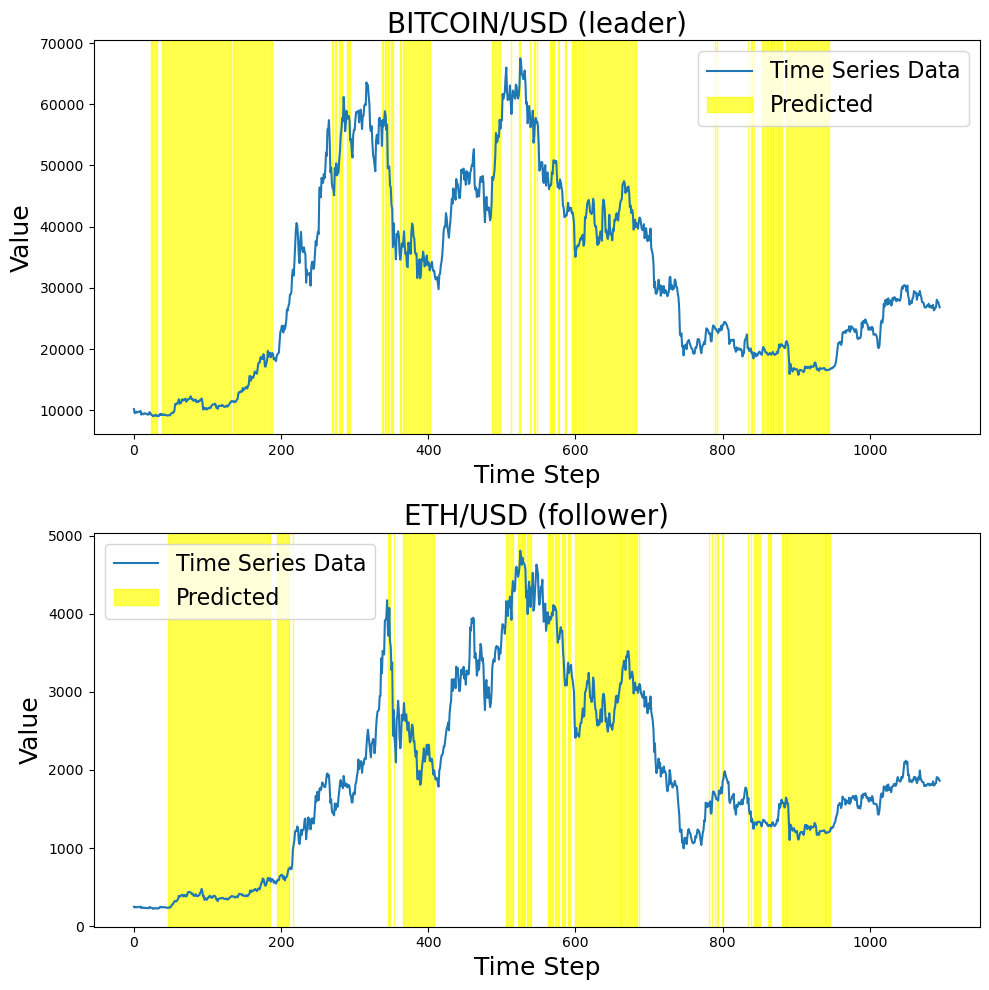}
    \caption{Yellow highlighted interval as following motif relation interval in Leading and Following Motif Time Series (Price of Bitcoin and Ether)}
    \label{fig:FRbitcoin1}
\end{figure}



\section{Conclusion}
In this work, we formalized a concept of following motifs between two time series and presented a framework to infer following patterns from data. The framework utilized one of efficient and scalable methods to retrieve motifs from time series called matrix profile method. We compared our proposed framework with several baselines. The framework performed better than baselines in the simulation datasets. In the dataset of sounds recording, the framework was able to retrieve the following motifs within a pair of time series that two singers sang following each other. In the cryptocurrency dataset, the framework was capable of capturing the following motifs within a pair of time series from two digital currencies, which implied that the values of one currency followed the values of another currency patterns.  Our framework can be utilized in any field of time series to get insight regarding following patterns between time series. 

The code and datasets can be found at \href{https://github.com/hughnaaek/Following-Motif-Relation}{https://github.com/hughnaaek/Following-Motif-Relation}.

\balance
\bibliographystyle{ACM-Reference-Format}

\appendix

\section{Case Study: Simple Alignment of Following Relation Interval in Synthetic and Real Dataset}

To simply align the following motif relation time steps of leader and follower time series, the Following Relation Method has index arrays of following relation time steps of leader and follower. Assuming that the follower time series follows similar size of the leader's motif and the method produces the following relation time steps of both Time Series, we can map the index arrays as follow relation alignment of leader and follower as Figure~\ref{fig:FRdataset2}. 

In the case that Following Relation Method detect some non-motif as False Positive, the simple alignment still more effectively aligns the time steps as Figure~\ref{fig:FRdataset2}, compared to DTW aligns following motif pairs of time series in Figure~\ref{fig:DTWdataset2}, where the alignment is wrong for entire time steps then it causes wrong alignment throughout the two Time Series. That is the reason why FLICA and VL-Transfer Entropy, which use DTW as a core of algorithms, are ineffective determine which time series in pair is a leader because of mixing of motif and non-motif inside the Time Series. 

Figure~\ref{fig:FRbitcoin} illustrates following motif relation alignment with Real-world Datasets of Cryptocurrency price trends of Bitcoin and Ether.

\begin{figure}
    \centering
    \includegraphics[width=0.9\linewidth]{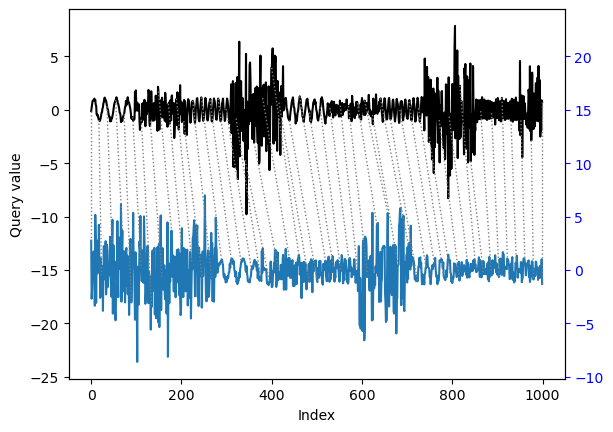}
    \caption{DTW aligns leader and follower Time Series of Dataset2 randomseed 883}
    \label{fig:DTWdataset2}
\end{figure}

\begin{figure}
    \centering
    \includegraphics[width=1\linewidth]{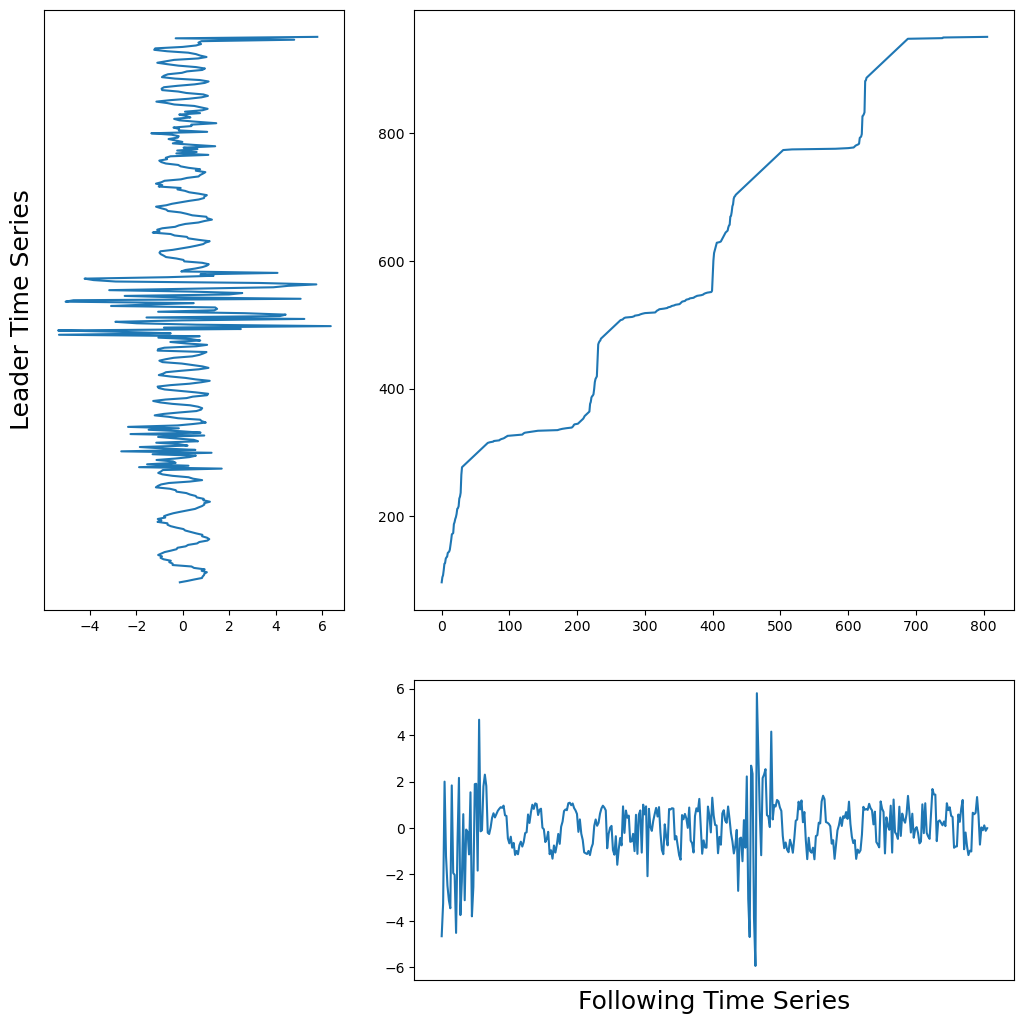}
    \caption{Following Relation Method aligns following ordered points on leader and follower Time Series of Dataset2 randomseed 883}
    \label{fig:FRdataset2}
\end{figure}


\begin{figure}
    \centering
    \includegraphics[width=0.9\linewidth]{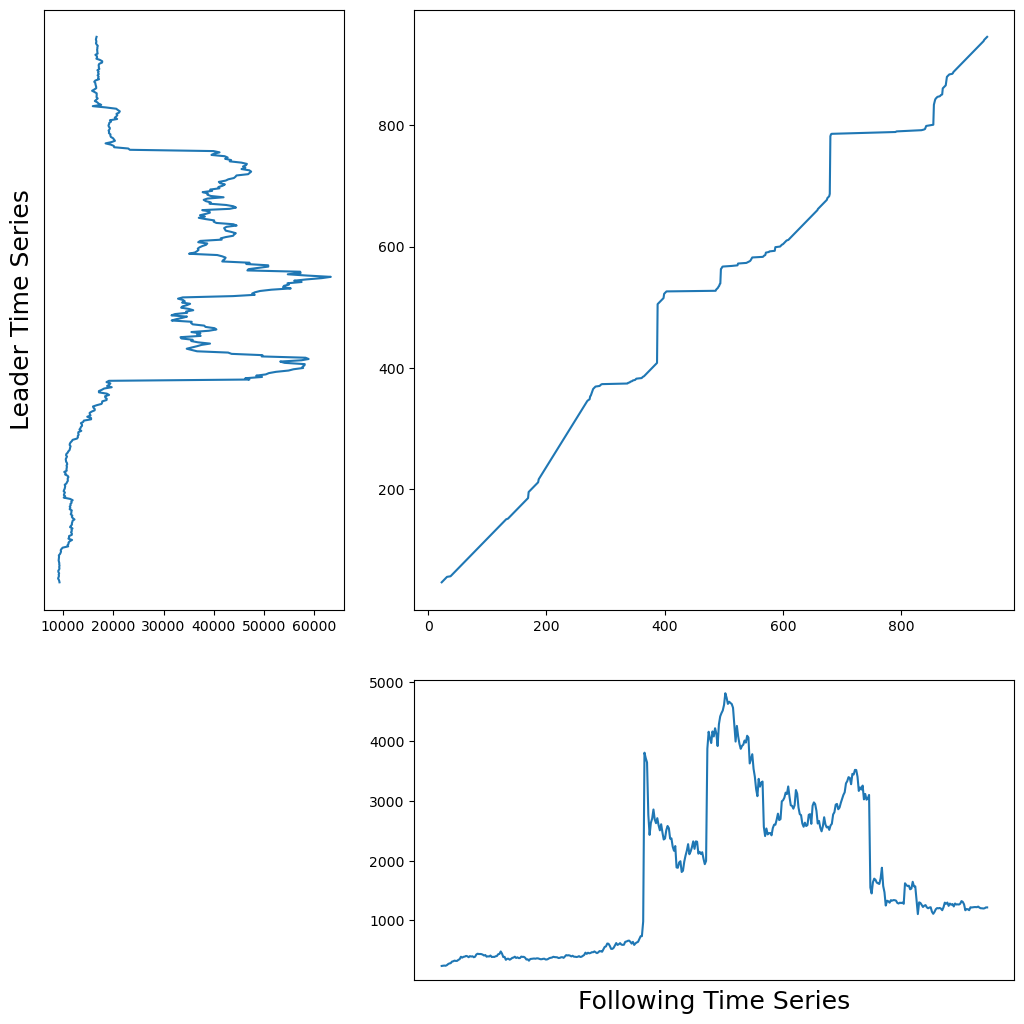}
   \caption{Following Relation Method aligns following ordered points on leader and follower Time Series of Bitcoin and Ether}
    \label{fig:FRbitcoin}
\end{figure}


\section{Case Study: Resolution of Time Series in Lost Star Singing}

Resolution of time series effects the similarity of shape. Thus, some certain degree of resolution is the suitable degree to find similarity of the following motif relation, because the following lags are varied and cause in mismatching of the similar part on leader and follower time series.  

If the Time Series leads and follows in some feature of the data as only-lyrics Lost Star singing data set, which the following relation exists in the rhythm, too high resolution makes similarity of shape is low as in Table~\ref{tab:intervalInTask2}. The table~\ref{tab:intervalInTask2} shows the more noise added, the better of all metrics especially for LeadVols of 0.000 std. and 0.001 std noise. The significant changing of the LeadVols to increasing noise indicates how the resolution impacts the search of similarity. The noise should be added enough in some certain degree to have clear shape of the time series. 


\begin{figure}
    \centering
    \includegraphics[width=0.9\linewidth]{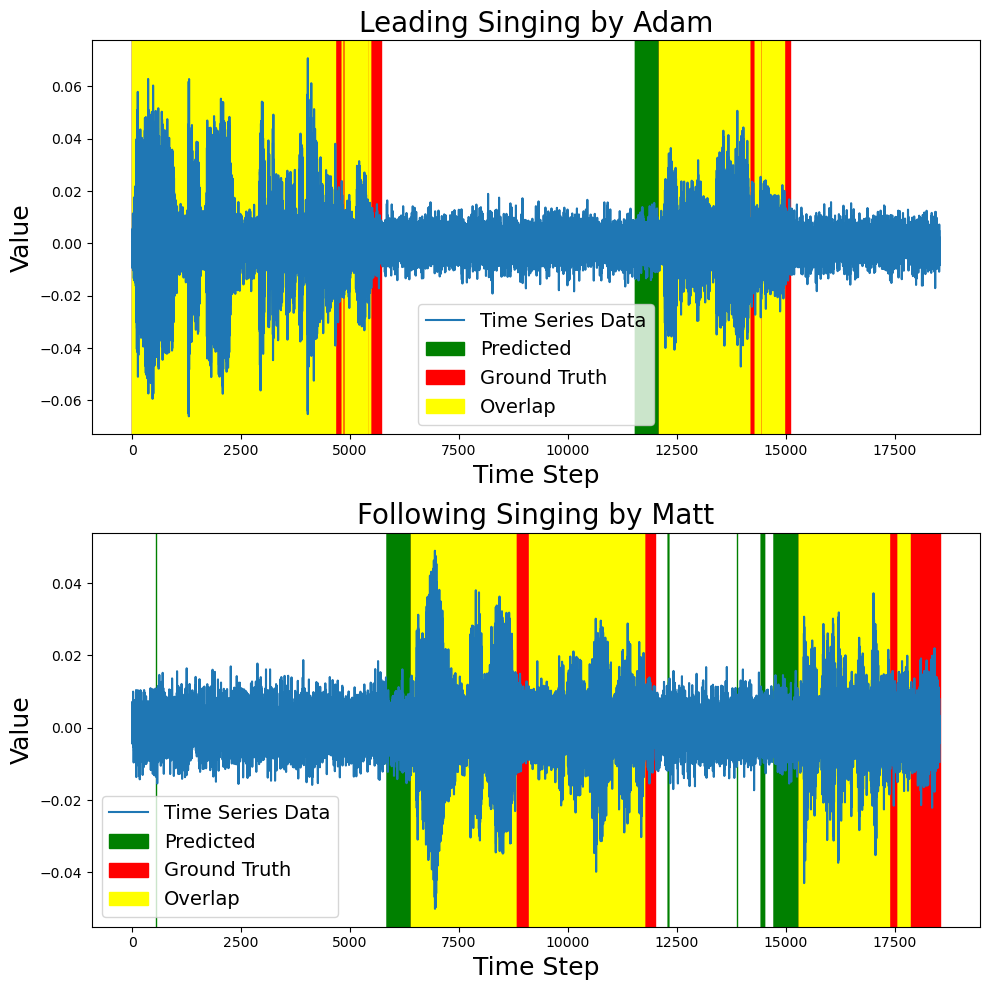}
   \caption{Following Relation Method indicates following ordered points on leader and follower Time Series of Lost Star Singing with Resolution (Gaussian noise Std. = 0.005); The best resolution to find similar motif for this data set.}
    \label{fig:FRloststar}
\end{figure}

\end{document}